\documentclass[11pt]{article}

\usepackage[final]{acl}

\usepackage{times}
\usepackage{latexsym}
\usepackage{algorithm}
\usepackage{algpseudocode}

\usepackage[utf8]{inputenc}

\usepackage{microtype}

\usepackage{inconsolata}
\usepackage{siunitx} 

\usepackage{graphicx}
\usepackage{svg}

\usepackage{amsfonts}
\usepackage{amsmath}
\usepackage{bbm}
\usepackage{booktabs}
\usepackage[most]{tcolorbox}
\usepackage{enumitem}
\usepackage[T1]{fontenc} 
\newcommand{\notcheckmark}{\textcolor{gray}{\ding{51}}\textsuperscript{\textcolor{gray}{\kern-0.5em\ding{55}}}}
\definecolor{mayablue}{rgb}{0.45, 0.76, 0.98}

\usepackage{colortbl}
\title{Bidirectional Curriculum Generation: A Multi-Agent Framework for Data-Efficient Mathematical Reasoning}

\author{
  Boren Hu\textsuperscript{1},
  Xiao Liu\textsuperscript{2},
  Boci Peng\textsuperscript{3},
  Xinping Zhao\textsuperscript{4},
  Xiaoran Shang\textsuperscript{5,6},
  Yun Zhu\textsuperscript{6†}, 
  Lijun Wu\textsuperscript{6} \\
  \textsuperscript{1}Zhejiang University, 
  \textsuperscript{2}University of Macau,\\
  \textsuperscript{3}Peking University,
  \textsuperscript{4}Harbin Institute of Technology (Shenzhen),\\
  \textsuperscript{5}Wuhan University,
  \textsuperscript{6}	Shanghai Artificial Intelligence Laboratory\\
  \texttt{boren@zju.edn.cn},\\
  \texttt{zhuyun@pjlab.org.cn}
}

\begin{document}
\maketitle
\renewcommand{\thefootnote}{\fnsymbol{footnote}}
\footnotetext[0]{† Corresponding author.}
\begin{abstract}
Enhancing mathematical reasoning in Large Language Models typically demands massive datasets, yet data efficiency remains a critical bottleneck. While Curriculum Learning attempts to structure this process, standard unidirectional approaches (simple-to-complex) suffer from inefficient sample utilization: they blindly escalate complexity even when foundational gaps persist, leading to wasted computation on unsolvable problems. To maximize the instructional value of every training sample, we introduce a novel Bidirectional Curriculum Generation framework. Unlike rigid trajectories, our multi-agent ecosystem mimics adaptive pedagogy to establish a closed feedback loop. It dynamically generates data by either complicating problems to challenge the model or, crucially, simplying them to repair specific reasoning failures. This mechanism ensures that the model consumes only the most effective data at any given stage. Grounded in the Optimal Pacing Theorem, our approach optimizes the learning trajectory, significantly outperforming baselines while achieving superior reasoning performance with substantially fewer instruction samples.
\end{abstract}

\section{Introduction}
\begin{figure}[t]
    \centering
    \includegraphics[width=\linewidth]{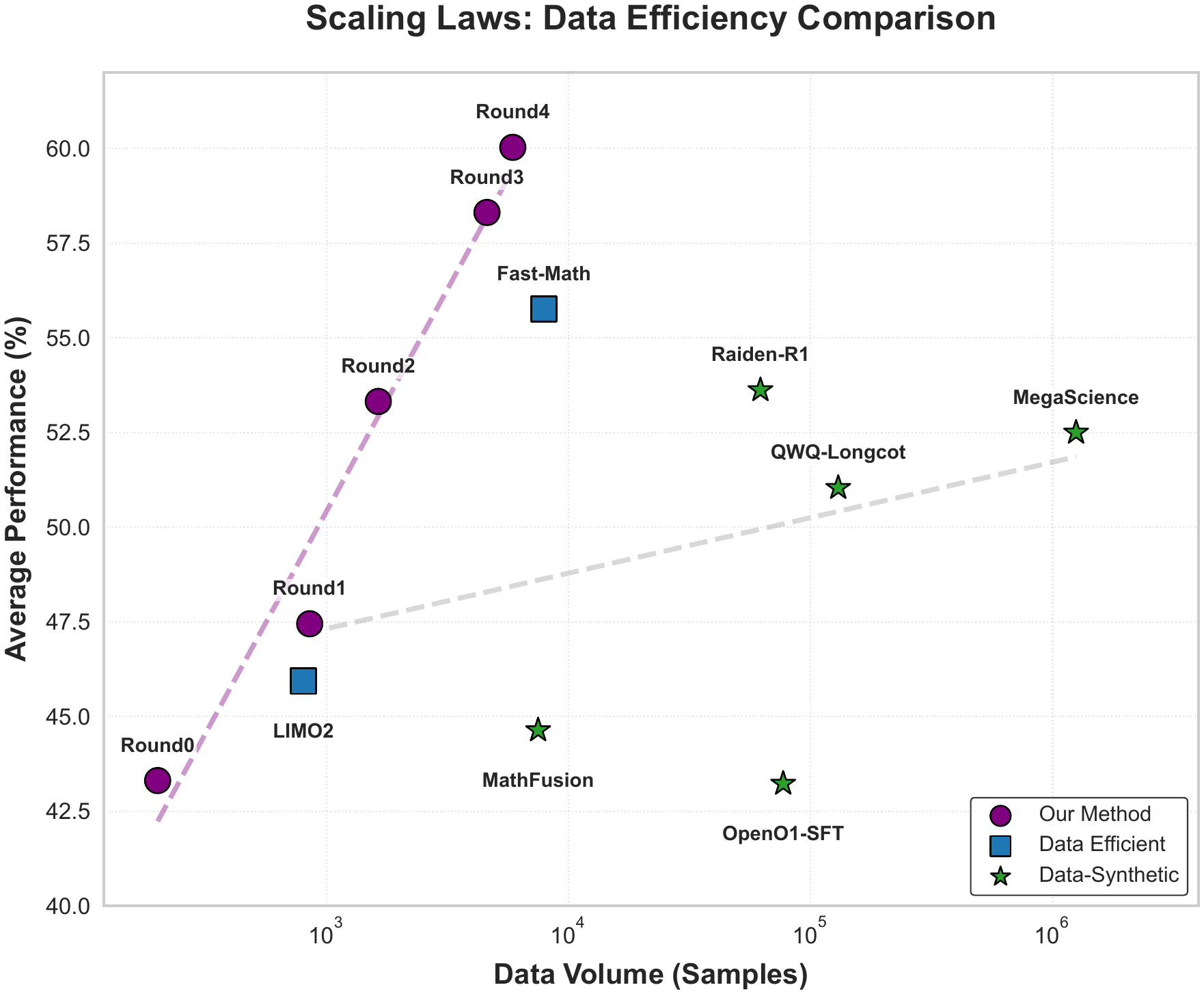}
    \caption{Comparison of mathematical reasoning performance across varying data scales. The x-axis (log-scale) represents the number of training samples, and the y-axis shows the average performance across six benchmarks. The dashed lines represent the fitted scaling laws for our method (purple) and baselines (gray).}
    \label{fig:diversity_pic}
\end{figure}
\label{pic:scalling laws data efficiency}

Data efficiency~\cite{luo2025survey} has emerged as a central challenge in training Large Language Models (LLMs) for mathematical reasoning~\cite{yu2025chain,xu2025towards,liu2025deepseek,yang2025qwen3}. Unlike general text generation, mathematical problem-solving requires strict logical coherence~\cite{sheng2025learning}, prompting a shift from massive scaling to strategic data selection. Recent studies, such as {FastMath}~\cite{yoshihara2025practical} and {LIMO}~\cite{ye2025limo}, demonstrate that data quality can outweigh sheer volume—whether through efficient pipeline optimization or the use of minimal, highly designed cognitive templates.

However, these approaches often face limitations in adaptability. For instance, LIMO relies on a fixed set of expert examples, which lacks the flexibility to address the specific weaknesses of different models. In realistic learning scenarios, effective progress often requires temporarily reverting to intermediate or lower-difficulty examples to repair missing steps when failures occur~\cite{li2024dotamath}. Yet, existing data generation pipelines generally lack mechanisms to diagnose such weaknesses~\cite{wang2023self,luo2023wizardmath}. Furthermore, standard Curriculum Learning (CL)~\cite{wang2021survey,bengio2009curriculum} typically relies on a unidirectional complexity assumption. This open-loop approach can cause training to stagnate or reinforce erroneous reasoning~\cite{lin2024rho} by forcing the model to confront problems for which it lacks the necessary scaffolding.

To address these challenges, we propose a \textbf{Bidirectional Curriculum Generation} framework. Instead of sorting a static dataset, our system utilizes a multi-agent ecosystem to dynamically adjust both problem difficulty and knowledge coverage, creating a closed feedback loop that aligns training data with the model's evolving reasoning abilities.

To enable controllable and interpretable curriculum scheduling, we first introduce a fine-grained \textbf{competition-based difficulty tagging}. We categorize mathematical problems into ten distinct levels, ranging from introductory middle-school problems to the most challenging International Mathematical Olympiad (IMO) tasks. This design provides a continuous difficulty signal, enabling localized diagnosis of model errors and allowing for flexible, non-monotonic transitions between levels.

Built upon this foundation, our framework deploys four collaborative agents that actively construct the optimal learning trajectory:
\begin{itemize}
    \item \textbf{Difficulty-Reduction Agent (The Repairer):} When the model fails, this agent generates transitional examples with reduced constraints to bridge conceptual gaps, preventing the reinforcement of errors.
    \item \textbf{Difficulty-Increasing Agent (The Challenger):} Once performance stabilizes, this agent promotes the model to higher difficulty levels, ensuring the model operates at the frontier of its capability.
    \item \textbf{Reverse-Generation Agent (The Reasoner):} To deepen understanding, this agent reformulates problems by reversing the roles of queries and answers while preserving mathematical equivalence, compelling the model to reason from solutions back to problem conditions.
    \item \textbf{Diversity-Enhancement Agent (The Explorer):} This agent expands coverage across knowledge domains to prevent overfitting to specific problem templates.
\end{itemize}

This approach creates a self-pacing data stream, theoretically supported by the Optimal Pacing Theorem. Extensive experiments demonstrate that our framework achieves superior logical coherence and reasoning performance across multiple benchmarks while significantly improving data efficiency compared to unidirectional baselines.

Our contributions are as follows:
\begin{itemize}
    \item \textbf{Bidirectional Framework:} We propose a dynamic curriculum system that abandons unidirectional scaling in favor of localized, bidirectional difficulty adjustments based on real-time model feedback.
    \item \textbf{Multi-Agent Modulation:} We develop a four-agent ecosystem capable of semantic rewriting, including novel reverse-generation tasks, to robustly train mathematical reasoning.
    \item \textbf{High-Efficiency Training:} Experiments confirm that our method outperforms static baselines while requiring substantially fewer instruction samples, validating the importance of adaptive curriculum generation, as further illustrated by the scaling behavior in Figure~\ref{pic:scalling laws data efficiency}. 
\end{itemize}

\section{Related Works}

The advancement of mathematical reasoning in Large Language Model~\cite{liu2025deepseek,yang2025qwen3} has catalyzed research into structured data construction and adaptive training strategies. Our work lies at the intersection of mathematical reasoning, synthetic data engineering, and curriculum learning.

\subsection{Mathematical Reasoning in Large Language Models} The pursuit of robust mathematical reasoning~\cite{lu2023survey} in LLMs has transitioned from general-purpose scaling to specialized architectural and data-driven optimizations~\cite{guan2025rstarmathsmallllmsmaster}. Foundational techniques such as Chain-of-Thought (CoT)~\cite{wei2022chain} and Tree-of-Thoughts (ToT)~\cite{yao2023tree} have demonstrated the power of structured inference. Recently, specialized models like DeepSeekMath~\cite{shao2025deepseekmath} and MetaMath~\cite{yu2023metamath} have set new benchmarks by fine-tuning on massive, high-quality corpora. Furthermore, the emergence of reasoning-oriented models like DeepSeek-R1~\cite{guo2025deepseek} highlights the significance of large-scale reinforcement learning and verification in mastering complex logic~\cite{zhang2025lessonsdevelopingprocessreward}. While these advancements are significant, the performance of these models remains highly sensitive to the quality and difficulty distribution of the training data, often struggling with "reasoning cliffs"~\cite{khan2025comment} when faced with tasks slightly beyond their training exposure.

\subsection{Scalable Synthetic Data Generation} Due to the scarcity of human-annotated mathematical proofs~\cite{ge2024scaling}, synthetic data generation has become the primary driver of performance gains. {MegaScience}~\cite{yu2023metamath} exemplifies this trend by scaling scientific and mathematical data to unprecedented volumes through fine-grained skill-tree decomposition. In terms of structural synthesis, {MathFusion}~\cite{pei2025mathfusion} introduces instruction fusion techniques to capture relational interdependencies between problems. Additionally, the Evolve-Instruct paradigm, popularized by {WizardMath}~\cite{luo2023wizardmath}, employs LLMs to progressively increase problem complexity~\cite{Greff_2022_CVPR}. However, a common limitation among these methods is their \textit{unidirectional} or \textit{open-loop} nature; they focus on increasing scale or complexity without real-time awareness of the student model's absorption capacity, leading to potential data redundancy or training divergence.


\subsection{Curriculum Learning and Optimal Pacing}

Curriculum Learning suggests that models learn more effectively when samples are presented in an increasing order of difficulty \cite{bengio2009curriculum}. Traditional curriculum learning in LLMs typically follows fixed pacing schedules based on perplexity or sentence length. However, according to the \textbf{Optimal Pacing Theorem}~\cite{hacohen2019power}, the most efficient learning occurs within the \textit{Zone of Proximal Development} (ZPD)~\cite{vygotsky1978mind}—a state where tasks are neither too simple nor overwhelmingly complex.

Unlike previous monotonic complexity scaling, our framework conceptualizes the curriculum as a dynamic, closed-loop process of bidirectional modulation. By adaptively calibrating task difficulty in response to the model's real-time proficiency, this approach ensures the training remains anchored within an optimal learning regime. Such theoretical alignment effectively bypasses the optimization plateaus and reasoning cliffs inherent in static, open-loop data evolution.


\section{Bidirectional Curriculum Generation Framework}
\begin{figure*}[t!]
    \centering
    \includegraphics[width=\linewidth]{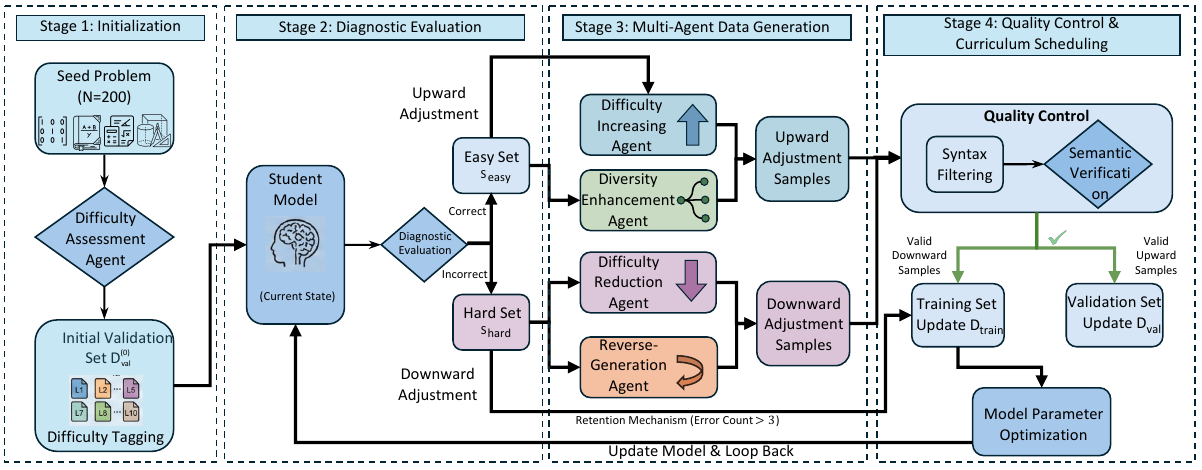}
    \caption{The pipeline of Bidirectional Curriculum with Multi-Agents for Data-Efficient Math Reasoning.}
    \label{fig:overview}
\end{figure*}

To address the critical bottleneck of data inefficiency in mathematical reasoning, we propose the Adaptive Bidirectional Curriculum Generation framework. In this section, we first formalize the problem definition in Section~\ref{subsec:notations}. We then detail the core workflow of our framework, comprising curriculum initialization (Section~\ref{subsec:initialize}), diagnostic evaluation (Section~\ref{subsec:diag}), and the multi-agent generation system that dynamically adjusts problem difficulty (Section~\ref{subsec:multiagent}). Finally, we present the data-model co-evolution mechanism in Section~\ref{subsec:coevolution}. The overall procedure is summarized in Algorithm~\ref{alg:bidirectional_curriculum}, and a theoretical justification of our pacing strategy is provided in Appendix~\ref{sec:appendix}.

\subsection{Problem Definition}\label{subsec:notations}
We formulate the mathematical reasoning task within a problem space $\mathcal{P}$, where each instance $p = (x, y) \in \mathcal{P}$ comprises a problem query $x$ and a reference solution $y$. To quantify complexity, we define a difficulty mapping function $\mathcal{L}: \mathcal{P} \rightarrow \{1, 2, \dots, K\}$, where $K=10$ represents the maximum complexity level\footnote{\url{https://artofproblemsolving.com/wiki/index.php/AoPS_Wiki:Competition_ratings}}. The goal is to train a student model, denoted as $\pi_\theta$, through an iterative curriculum over discrete time rounds $t \in \{1, \dots, T\}$. At each round $t$, the objective is to generate and select an optimal training batch $\mathcal{D}^{(t)} \subset \mathcal{P}$ that aligns with the student's current capability $\pi_{\theta_{t-1}}$. Formally, we aim to minimize the cumulative reasoning error while maximizing data efficiency:
\begin{equation}\min_{\theta} \sum_{t=1}^{T} \mathbb{E}_{(x,y) \sim \mathcal{D}^{(t)}} [-\log \pi_{\theta_t}(y|x)],
\end{equation}
subject to the constraint that $\mathcal{D}^{(t)}$ is synthesized via our Adaptive Bidirectional Curriculum framework. This framework operates as a closed-loop system, consisting of four sequential stages: Seed Initialization, Diagnostic Evaluation, Multi-Agent Data Generation, and Curriculum Co-evolution.

\subsection{Curriculum Initialization}\label{subsec:initialize}
To ensure a high-quality initialization while maximizing data efficiency, we deliberately constrain the diversity of the seed set. Specifically, we use an LLM-as-Judge to assign each candidate problem to one of seven core disciplines: \textit{Prealgebra}, \textit{Algebra}, \textit{Intermediate Algebra}, \textit{Geometry}, \textit{Number Theory}, \textit{Counting \& Probability}, and \textit{Precalculus}:
\begin{equation}
u_i=\Psi(p_i).
\end{equation}
Based on these subject annotations, we perform stratified sampling to build the initial seed dataset $\mathcal{D}_{\text{seed}}=\mathcal{D}^{(0)}_{\text{val}}$, which contains $N=200$ high-quality instances drawn from GSM8K~\cite{cobbe2021gsm8k} and MATH~\cite{hendrycks2021measuring}. As summarized in Table~\ref{tab:data_sources}, this curated subset provides broad topical coverage and a pedagogically coherent starting point for subsequent curriculum construction.

\begin{table}[t!]
\centering

\resizebox{\columnwidth}{!}{
\begin{tabular}{lcr}
\hline
\textbf{Subject Category} & \textbf{Source} & \textbf{Count} \\ \hline
Prealgebra                & GSM8K \& MATH   & 51            \\
Algebra                   & MATH            & 26             \\
Intermediate Algebra      & MATH            & 22             \\
Geometry                  & MATH            & 21             \\
Number Theory             & MATH            & 21             \\
Counting \& Probability   & MATH            & 29             \\
Precalculus               & MATH            & 30             \\ \hline
\textbf{Total}            & -               & \textbf{200}   \\ \hline
\end{tabular}%
}
\caption{Statistics of the seed dataset across different mathematical domains and their respective sources.}
\label{tab:data_sources}
\end{table}

Second, to support the subsequent generation stage, we explicitly annotate the difficulty of each seed problem. Using the same automated evaluator, we define a mapping function $\Phi:\mathcal{P}\rightarrow\mathcal{L}$, where $\mathcal{L}={1,\dots,10}$ denotes our difficulty scale (Table~\ref{tab:diff_tag}). The assigned difficulty level is:
\begin{equation}
\ell_i=\Phi(p_i).
\end{equation}

Finally, we assemble the annotated seed set:
\begin{equation}
\mathcal{D}_{\text{val}}^{(0)}=\{(x_i, y_i, \ell_i, u_i)\}_{i=1}^{N},
\end{equation}
which serves as the starting point for the subsequent pipeline.

\subsection{Diagnostic Evaluation}\label{subsec:diag}
At each curriculum iteration $t$, we conduct a diagnostic evaluation to precisely delineate the capability frontier of the student model $\pi_{\theta_t}$. Specifically, the model generates solutions for all instances in the current validation pool $\mathcal{D}_{\text{val}}^{(t)}$. To ensure a rigorous assessment of the model's operational limits, solution correctness is determined through the joint verification of the generated reasoning chain $\hat{c}$ and the final answer $\hat{y}$. We define the correctness indicator function (see Table~\ref{tab:verification_prompt}) as:
\begin{equation}
    \mathbbm{1}_{\text{corr}}(p, \pi_{\theta_t}) = 
    \begin{cases} 
        1 & \text{if } \hat{c} \text{ and } \hat{y} \text{ are both correct} \\
        0 & \text{otherwise}
    \end{cases}
\end{equation}
Based on this metric, we partition $D_{\text{val}}^{(t)}$ into two disjoint subsets that drive the bidirectional curriculum adaptation:
\begin{equation}
\begin{aligned}
    S_{\text{hard}}^{(t)} &= \left\{ p \in \mathcal{D}_{\text{val}}^{(t)} \mid \mathbbm{1}_{\text{corr}}(p, \pi_{\theta_t}) = 0 \right\} \\
    S_{\text{easy}}^{(t)} &= \left\{ p \in \mathcal{D}_{\text{val}}^{(t)} \mid \mathbbm{1}_{\text{corr}}(p, \pi_{\theta_t}) = 1 \right\}
\end{aligned}
\end{equation}
Here, $S_{\text{hard}}^{(t)}$ captures specific failure modes that necessitate targeted remediation (via simplification), while $S_{\text{easy}}^{(t)}$ represents mastered concepts suitable for complexity scaling (via advancement).

\subsection{Multi-Agent Data Generation}\label{subsec:multiagent}
\label{multi-agent generate data}

To operationalize the curriculum generation, we define a generator ensemble $\mathcal{G} = \{\mathcal{G}_{\text{red}}, \mathcal{G}_{\text{rev}}, \mathcal{G}_{\text{inc}}, \mathcal{G}_{\text{div}}\}$. These specialized agents are designed to address the divergent pedagogical requirements of identifying failures and consolidating mastery. By dynamically synthesizing tailored instruction data, the ensemble ensures that the training distribution shifts adaptively with the student model's proficiency.

\paragraph{Handling Hard Samples: Downward Adjustment.}We define ``downward adjustment'' not merely as simplification, but as a return to foundations to repair reasoning gaps in $S_{\text{hard}}^{(t)}$. This trajectory is executed via two complementary operators. The \textbf{Difficulty-Reduction Agent} ($\mathcal{G}_{\text{red}}$) explicitly lowers the difficulty barrier by reducing problem complexity ($L(p') < L(p)$), offering a smoother learning gradient (see Table~\ref{tab:dif_reduction_prompt}). Complementing this, the \textbf{Reverse-Generation Agent} ($\mathcal{G}_{\text{rev}}$) operates on the logical foundation. By generating inverse problems, it forces the model to revisit the problem's underlying equation from the opposite direction (see Table~\ref{tab:reverse_prompt}). While not necessarily simpler, this operation is ``downward'' in the pedagogical sense: it forces the model to stop advancing and instead deepen its grasp of existing concepts through bidirectional verification, ensuring that the original failure was not due to superficial memorization.

\paragraph{Handling Easy Samples: Upward Expansion.}
For mastered instances in $S_{\text{easy}}^{(t)}$, we initiate an upward expansion trajectory that scales along two complementary axes: reasoning depth and thematic breadth.
First, targeting reasoning depth, the \textbf{Difficulty-Increase Agent} ($\mathcal{G}_{\text{inc}}$) introduces advanced concepts or multi-step dependencies (see Table~\ref{tab:increaser_prompt}). This synthesis yields complex samples $p''$ satisfying $L(p'') > L(p) + \epsilon$ (for $\epsilon > 0$), systematically extending the model's competence frontier.
Second, targeting thematic breadth, the \textbf{Diversity-Enhancement Agent} ($\mathcal{G}_{\text{div}}$) generates structural variants $p_{\text{iso}}$ by re-contextualizing the core logic within diverse target categories (see Table~\ref{tab:add_type_prompt}). While this process alters surface features (such as contextual framing or variable instantiation), it enforces strict difficulty bounds ($|L(p_{\text{iso}}) - L(p)| < \tau$), where $\tau$ represents a predefined tolerance threshold. Empirically, this cross-category perturbation often introduces a slight, beneficial difficulty elevation ($L(p_{\text{iso}}) \ge L(p)$), thereby mitigating overfitting and reinforcing generalization across varied mathematical contexts.

\paragraph{Quality Control and Filtering.} To ensure high data quality, we apply a coarse-to-fine filtering pipeline. {First}, we perform specific formatting checks to filter out malformed samples that lack essential structural tags (e.g., \texttt{<think>} for reasoning traces). {Subsequently}, the surviving samples undergo a content-level assessment via a Verifier Agent (see Table~\ref{tab:verification_prompt}). This agent evaluates reasoning correctness, ensuring that only valid and logically sound samples are retained for training.

\subsection{Curriculum Co-evolution}\label{subsec:coeval}
\label{subsec:coevolution}

\paragraph{Data Evolution.}The transition of datasets involves a strategic allocation of generated samples to either ``instruction'' or ``evaluation'' roles. To construct the curriculum for phase ($t+1$), we enforce three explicit rules:

\textbf{1. Error Retention Policy:} We maintain a failure counter, $\text{ind}_{\text{err}}(p)$, for each problem in the hard subset $S_{\text{hard}}$. Instances exceeding a threshold (i.e., $\text{ind}_{\text{err}}(p) > 3$) are classified as \textit{persistent failures}. To break the stalemate of repeated validation errors, these samples are transferred directly to the training set for supervised memorization.

\textbf{2. Training Set Update (Instructional Scaffolding):} The training set aggregates data intended for direct optimization. We include verified samples from the downward-adjustment agents ($\mathcal{D}_{\text{remedy}}^{(t)}$) because they serve as intermediate scaffolding to bridge specific reasoning gaps.
\begin{equation}
\mathcal{D}_{\text{train}}^{(t+1)} = \mathcal{D}_{\text{remedy}}^{(t)} \cup \{ p \in S_{\text{hard}}^{(t)} \mid \text{ind}_{\text{err}}(p) > 3 \}
\end{equation}
By training on these simplified or inverse problems, the model explicitly learns the foundational logic required to solve the original hard instances.

\textbf{3. Validation Set Renewal (Frontier Expansion):} The validation set is refreshed to establish the next diagnostic baseline. We incorporate advanced samples generated by upward-expansion agents ($\mathcal{D}_{\text{adv}}^{(t)}$) to serve as a new evaluation frontier.
\begin{equation}
\mathcal{D}_{\text{val}}^{(t+1)} = \{ p \in S_{\text{hard}}^{(t)} \mid \text{ind}_{\text{err}}(p) \leq 3 \} \cup \mathcal{D}_{\text{adv}}^{(t)}
\end{equation}
Placing these harder, unmastered samples in the validation set allows us to rigorously assess whether the model's capability boundary has successfully expanded, rather than prematurely training on them.

\paragraph{Model Evolution.}
Once the curriculum $\mathcal{D}_{\text{train}}^{(t+1)}$ is established, the student model parameters $\theta$ are updated via supervised fine-tuning to minimize the cross-entropy loss. 
Upon completion, the updated model $\pi_{\theta_{t+1}}$ advances to the next Diagnostic Evaluation phase using the renewed validation set $\mathcal{D}_{\text{val}}^{(t+1)}$, thereby closing the loop for continuous iterative refinement. 

The integrated workflow, detailing the iterative interaction between diagnostic evaluation, multi-agent generation and model optimization, is presented in Algorithm~\ref{alg:bidirectional_curriculum}.

\begin{table*}[t]
    \centering
    \resizebox{\linewidth}{!}{
    \begin{tabular}{lccccccccc}
        \toprule
        \textbf{Model} & \textbf{Data Volume} & \textbf{GSM8K} & \textbf{MATH500} & \textbf{OMNI-MATH} & \textbf{OlympiadBench} & \textbf{AIME2024} & \textbf{AIME2025} & \textbf{Average} \\
        \midrule
        
        Qwen3-8B-base & - & 92.00 & 79.60 & 30.60 & 47.20 & 6.70 & 10.80 & 44.50 \\
        \midrule
        \rowcolor{gray!10}\multicolumn{9}{l}{\textbf{Data Efficient}} \\

        LIMO2 & 800 & 92.19 & 82.00 & 18.14 & 46.74 & 13.30 & 23.30 & 45.94 \\
        Fast-Math & 7.9K & 92.80 & 86.60 & 39.60 & \textbf{61.00} & 28.80 & 25.80 & 55.76 \\
        \midrule
        \rowcolor{gray!10}\multicolumn{9}{l}{\textbf{Data-Synthetic}} \\
        Raiden-DeepSeek-R1 & 62K & 92.72 & 86.00 & 36.34 & 58.75 & 27.50 & 20.41 & 53.62 \\
        QWQ-LongCoT & 130K & 92.72 & 84.20 & 30.60 & 54.15 & 25.41 & 19.16 & 51.04\\
        OpenO1-SFT & 77K & 91.05 & 76.40 & 25.34	& 44.51	& 12.08 & 10.00 & 43.23 \\
        MegaScience & 1.25M & 93.40 & 84.80 & 35.80 & 57.60 & 25.40 & 17.90 & 52.50 \\
        MATHFusion & 7473 & 92.42 & 79.80 & 27.46 & 48.22 & 6.67 & 13.30 & 44.64 \\
        
        \midrule
         \rowcolor{gray!10}\multicolumn{9}{l}
         {\textbf{Our Method}} \\
        Round 0 (Seed Data) & 200 & 91.05 & 76.80 & 14.57 & 34.12 & 20.00 & 23.33 & 43.31 \\
        Round 1 & 850 & 94.01 & 82.60 & 29.29 & 48.81 & 16.67 & 13.33 & 47.45 \\
        Round 2 & 1,633 & 94.24 & 83.20 & 34.03 & 51.78 & 30.0 & 26.67 & 53.32 \\
        Round 3 & 4,594 & 93.93 & \textbf{89.20} & \textbf{47.42} & 59.35 & 26.67 & 33.33 & 58.31 \\
        Round 4  & 5,873 & \textbf{94.47} & \textbf{89.20} & 46.43 & 60.08 & \textbf{30.00} & \textbf{40.00} & \textbf{60.03} \\
        \bottomrule
    \end{tabular}
    }
    \caption{Comparison of mathematical reasoning performance across different models and data scales. Data Volume denotes the number of samples used for training.}
    \label{tab:main_results}
\end{table*}

\section{Experiments}
\subsection{Experimental Settings}\label{subsec:setting} \paragraph{Benchmarks.} To comprehensively assess mathematical reasoning capabilities, we conduct evaluations across six diverse benchmarks. We utilize GSM8K~\cite{cobbe2021gsm8k} and MATH-500~\cite{lightman2023lets} as \textbf{In-Domain (ID)} benchmarks to measure performance on fundamental and high-school level problems. For \textbf{Out-of-Domain (OOD)} evaluation, we employ AIME 2024~\cite{maa_aime_2024}, AIME 2025~\cite{maa_aime_2025}, Omni-Math~\cite{gao2024omnimath}, and OlympiadBench-Math~\cite{he2024olympiadbench}, which are designed to test the model's proficiency in tackling novel, competition-level challenges. Together, these datasets provide a robust assessment of reasoning across varying levels of complexity and domain specificity. All evaluations are conducted using the OpenDataArena framework~\cite{cai2025opendataarena}.

\paragraph{Baselines.} We compare our method against competitive open-sourced baselines across two paradigms: (1) Data-Efficient methods, including LIMO~\cite{ye2025limo} and FastMATH~\cite{yoshihara2025practical}, which focus on efficiently selecting high-quality subsets from existing datasets; and (2) Data-Synthetic methods, such as MathFusion~\cite{pei2025mathfusion}, MegaScience~\cite{fan2025megascience}, 
QwQ-LongCoT~\cite{amphora2024qwq}, 
OpenO1-SFT~\cite{xia2025generative} and Raiden-DeepSeek-R1~\cite{Raiden-DeepSeek-R1}, which leverage LLMs to synthesize new datasets or rewrite original data.

\paragraph{Model Instantiation and Training.} We utilize DeepSeek~\cite{guo2025deepseek} as the backbone for our four data generation agents; the corresponding prompts are detailed in Appendix \ref{sec:prompts}. For the student model, we adopt Qwen3-8B-Base~\cite{yang2025qwen3}. To balance exploration and exploitation across training iterations, we employ a decaying hyperparameter schedule: the learning rate is gradually reduced from $5 \times 10^{-6}$ to $2 \times 10^{-6}$, and the training duration is decreased from 6 epochs to 3. This progressive strategy allows the model to capture broad foundational knowledge early on while ensuring stable convergence during later refinement. Ultimately, this process yields a total of 5,873 high-quality instances.


\subsection{Main Results}
The main results are demonstrated in Table \ref{tab:main_results}. Our analysis reveals several key insights:

\paragraph{Overall Performance.} Our proposed method significantly outperforms the base model and all open-source baselines after four iterations. With only 5,873 training samples, our model achieves an average score of \textbf{60.03}, representing a substantial improvement of 15.53 over the Qwen3-8B-base and surpassing the strongest baseline, Fast-Math, by 4.27. 

\paragraph{Performance Evolution Across Iterations.} The iterative rounds demonstrate a clear upward trajectory in reasoning capabilities. While the initial Round0 shows a slight performance drop compared to the base model due to the extremely small data scale, the performance improves rapidly from Round1 to Round4. Notably, the most significant leap occurs between Round1 and Round3, where the Average score jumps from 47.45 to 58.31. This suggests that our multi-agent synthesis and filtering pipeline effectively accumulates high-quality reasoning patterns.

\paragraph{Superiority of Synthesis Quality over Data Scale.} Compared to Data-Synthetic methods like MegaScience, which utilizes a massive 1.25M dataset, our method achieves higher performance (60.03 vs. 52.5) using less than 0.5 of its data volume. This contrast highlights that the logical rigor of synthetic data is the primary driver of mathematical reasoning.

\paragraph{Strong Generalization on OOD Tasks.} The most remarkable gains are observed in Out-of-Domain (OOD) competition-level benchmarks. On AIME 2025, our final model reaches \textbf{40.0}, nearly doubling the performance of Raiden-DeepSeek-R1 (20.41) and MegaScience (17.9). These results demonstrate that our iterative refinement process successfully fosters deep reasoning robustness that generalizes well to novel, highly challenging mathematical distributions.

\subsection{Synthetic Data Analysis} 
To evaluate the quality of the synthetic data, we visualize the diversity distribution of the generated samples in Figure~\ref{fig:diversity_pic} and their difficulty distribution in Figure~\ref{fig:difficulty_pic}. Our analysis reveals the following key insights:

\begin{figure}
    \centering
    \includegraphics[width=\linewidth]{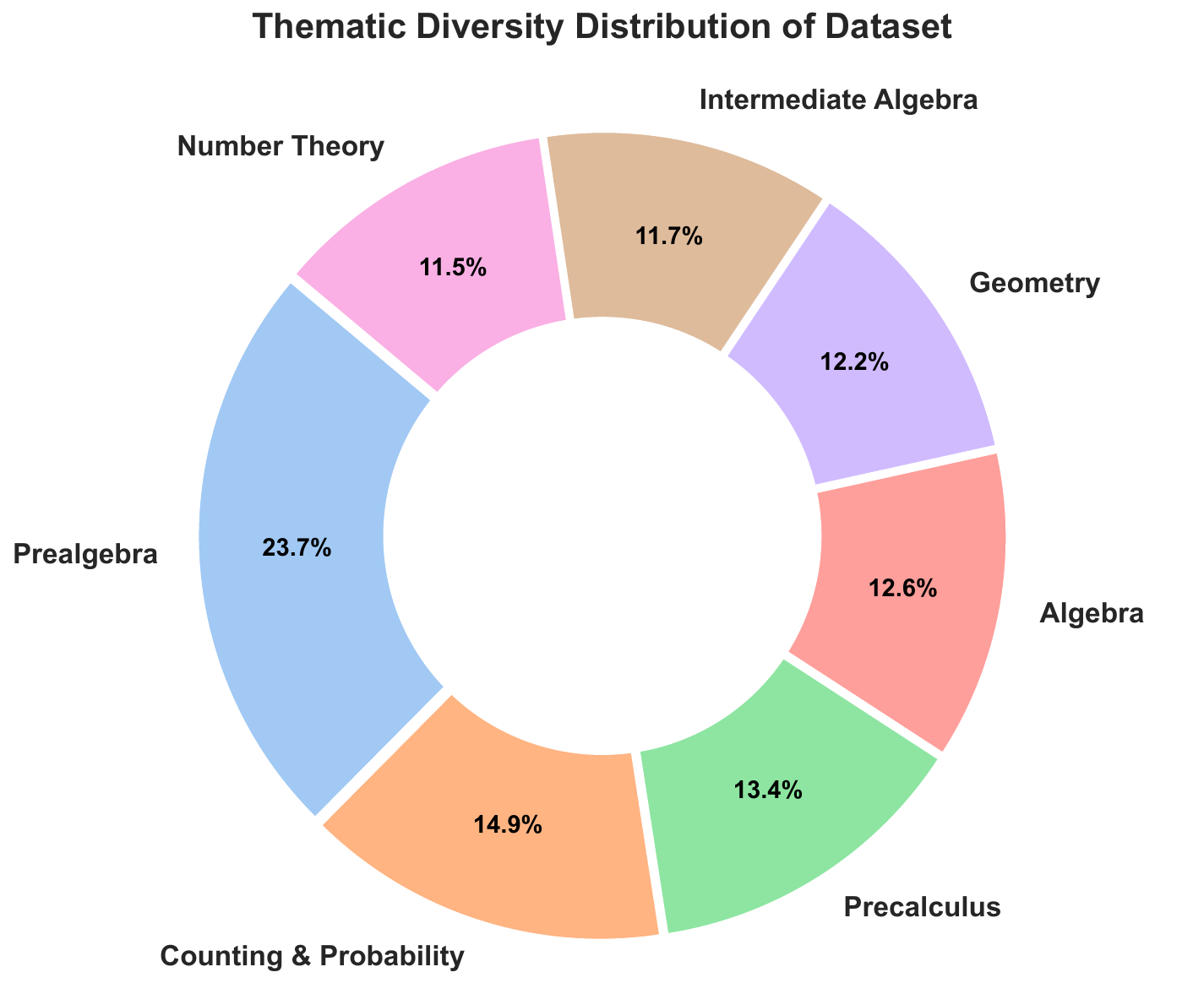}
    \caption{The Diversity distribution about generated datasets}
    \label{fig:diversity_pic}
\end{figure}  
\paragraph{Multi-Domain Semantic Coverage.} The thematic diversity shown in Figure~\ref{fig:diversity_pic} confirms that our multi-agent framework effectively maintains semantic diversity across different domains, which includes seven core mathematical disciplines. Notably, advanced subjects such as \textit{Precalculus} (13.4\%) and \textit{Intermediate Algebra} (11.7\%) are represented with substantial volume, comparable to foundational \textit{Algebra}. This broad coverage ensures that the reasoning heuristics learned by the model are generalizable across diverse mathematical contexts, supporting the strong OOD performance observed in OMNI-MATH and OlympiadBench.

\paragraph{Hierarchical Difficulty Scaling.} As illustrated in Figure~\ref{fig:difficulty_pic}, our dataset maintains a robust pyramidal structure. While foundational levels (1-3) provide necessary basic reasoning patterns, our pipeline successfully retains a significant ``long-tail'' of high-difficulty samples, with over 600 instances categorized in the elite competition tiers (levels 7-10). This high-density concentration of challenging problems directly correlates with the model's breakthrough on the AIME 2025 benchmark, as it forces the model to move beyond simple pattern matching toward deep, multi-round logical deduction.

\begin{figure}
    \centering
    \includegraphics[width=\linewidth]{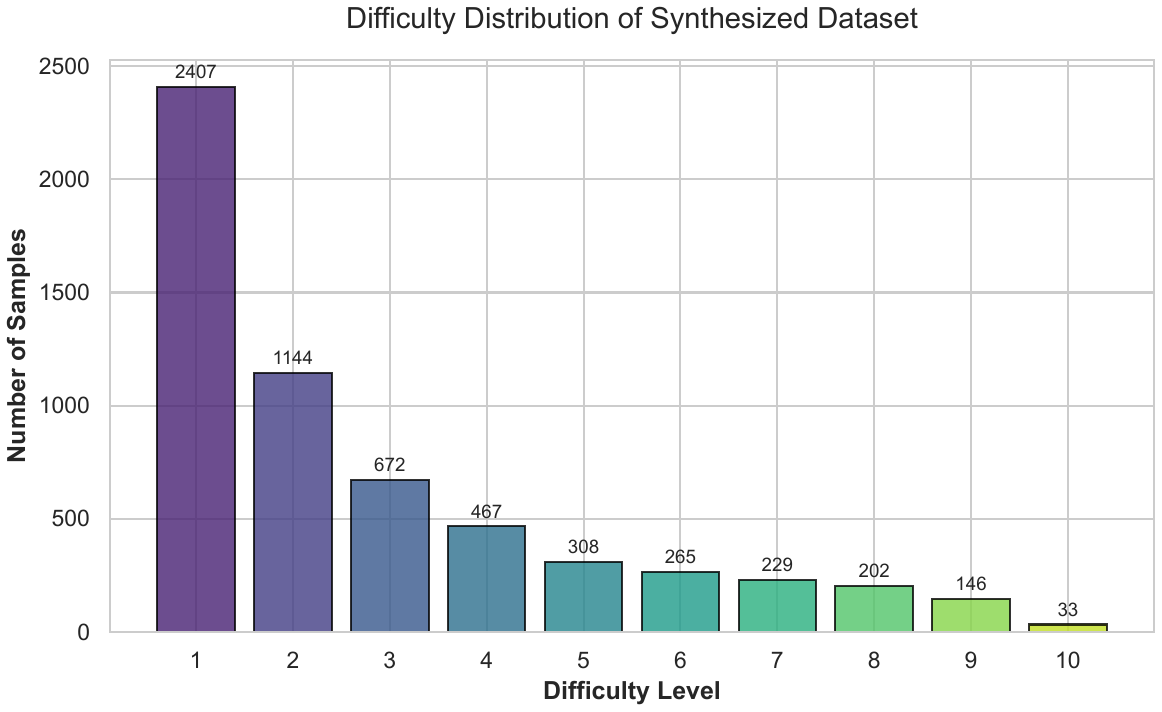}
    \caption{The difficulty distribution of generated datasets}
    \label{fig:difficulty_pic}
\end{figure}

\begin{table*}[t]
    \centering
    \resizebox{\linewidth}{!}{
    \begin{tabular}{lcccccc}
        \toprule
        \textbf{Model} & \textbf{Data Volume} & \textbf{GSM8K} &   \textbf{OlympiadBench} & \textbf{AIME2024} & \textbf{AIME2025} & \textbf{Average} \\
        \midrule
        ALL & 5,873 & 94.47  & 60.08 & 30.00 & 40.00 & 56.13 \\
        \rowcolor{gray!10}\multicolumn{7}{c} {\textbf{Impact of Difficulty Agents}} \\
        Foundational Subset & 4,663 & 93.78  & 61.72 & 23.30 & 33.30 & 53.02 \\
        Advanced Subset & 1,210 & 94.24  & 55.93 & 33.30 & 30.00 & 53.37 \\
        \rowcolor{gray!10}\multicolumn{7}{c} {\textbf{Impact of Reverse Generation Agent}} \\
        w/o Reverse Data & 3,487 & 93.86  & 54.90 & 36.30 & 20.00 & 51.35 \\
        \rowcolor{gray!10}\multicolumn{7}{c} 
        {\textbf{Impact of Diversity Enhancement Agent}} \\
        w/o Algebra & 5,131 & 94.47  & 61.28 & 26.67 & 30.00 & 53.10 \\
        w/o Geometry & 5,158 & 94.62  & 60.98 & 33.33 & 26.67 & 53.89 \\
        w/o Counting & 5,000 & 94.77 & 60.68 & 36.67 & 23.30 & 53,80 \\
        w/o A,C & 4,258 & 95.38 & 61.42 & 26.67 & 23.30 & 51.69 \\
        w/o A, G, C, P & 2,757 &
94.24 &
58.01 &
16.67 &
20.00 &
47.23  \\
        \bottomrule
    \end{tabular}
    }
    \caption{The ablation experiments.}
    \label{tab:ablation_experiments}
    \vspace{-0.16cm}
\end{table*}
\subsection{Ablation Study}
To dissect the efficacy of individual components within our proposed framework, we conducted a comprehensive series of ablation studies including the impact of difficulty agents, the impact of reverse generation agent, and the impact of diversity enhancement agent. In addition, all model variants are fine-tuned using the Qwen3-8B-Base backbone for a consistent duration of 6 epochs. 

\paragraph{Impact of Difficulty Agents.} To validate the necessity of our bidirectional curriculum—specifically the roles of difficulty-reduction agents ($\mathcal{G}_{\text{red}}$) versus difficulty-increase agents ($\mathcal{G}_{\text{inc}}$)—we stratified the curated dataset into two distinct subsets based on the difficulty level $L(p)$. We defined a {Foundational Subset} ($L(p) < 5$, comprising 4,663 samples) representing the output of downward adjustment, and an {Advanced Subset} ($L(p) \ge 5$, comprising 1,210 samples) representing the output of upward expansion. We trained separate models on each subset under identical hyperparameters and compared them against the full model (``ALL''). The comparative results are summarized in Table \ref{tab:ablation_experiments}, yielding the following critical insight: \\The model trained on the full dataset (``ALL'') achieves the highest average performance (56.13), outperforming the Foundational (53.02) and Advanced (53.37) subsets. This validates our core hypothesis: high-performing mathematical reasoning requires both the scaffolding provided by easier problems and the cognitive stretch provided by complex problems. The bidirectional approach prevents the model from being trapped in low-level reasoning or overwhelmed by high-complexity leaps.
\paragraph{Impact of Reverse Generation Agent} To evaluate the contribution of the reverse-generation agent ($\mathcal{G}_{\text{rev}}$), we removed all samples generated through solution-condition inversion, resulting in a training set of 3,487 samples (``w/o Reverse Data''). As shown in Table \ref{tab:ablation_experiments}, removing reverse data leads to a noticeable decline in average performance (dropping from 56.13 to 51.35). As a result, by forcing the model to reconstruct problem conditions from solutions, the reverse agent encourages a deeper, more symmetrical understanding of mathematical relationships, which is essential for generalization to unseen, high-difficulty problems.

\paragraph{Impact of diversity enhancement agent}
We evaluate the Diversity Enhancement Agent by selectively removing mathematical domains. Table \ref{tab:ablation_experiments} indicates that reduced data diversity directly impairs model performance. Specifically, removing any single domain—Algebra, Geometry, or Counting—decreases the average score from the baseline 56.13 to approximately 53.8. This degradation is exacerbated by multi-domain exclusion: removing both Algebra and Counting (``w/o A,C") yields an average of 51.69, while excluding four types (``w/o A,G,C,P") causes a steep decline to 47.23. Crucially, while performance on easier tasks (GSM8K) is unaffected, capability on hard benchmarks drops sharply (e.g., AIME 2024 falls from 30.00 to 16.67), validating the agent's critical role in handling complex reasoning tasks.

\section{Conclusion}
This paper introduces a Bidirectional Curriculum Generation framework to overcome data efficiency bottlenecks in LLM mathematical reasoning. By replacing rigid unidirectional scaling with a multi-agent feedback loop, our system dynamically adjusts problem difficulty and diversity to match the model’s real-time reasoning frontier. Experimental results confirm that our approach significantly outperforms static baselines. Specifically, we achieve superior performance on complex benchmarks like AIME and Omni-Math using substantially fewer instruction samples. This validates the Optimal Pacing Theorem in practice, demonstrating that adaptive, bidirectional data synthesis is key to robust and efficient cognitive training.

\clearpage
\section*{Limitations}
Despite its effectiveness, our framework is optimized for mathematical reasoning, relying on a competition-based difficulty tagging system and domain-specific agents (e.g., Reverse-Generation). Adapting this bidirectional curriculum to less structured fields, such as creative writing or legal reasoning, remains challenging due to the difficulty of defining objective difficulty levels and logical failures in those contexts.




\bibliography{custom}

\appendix

\section{Theorem Proof}
\label{sec:appendix}

\newtheorem{theorem}{Theorem}

\newenvironment{proof}[1][Proof 1]{\par\noindent\textbf{#1} \upshape}{\hfill}

\let\oldtheorem\theorem
\let\endoldtheorem\endtheorem
\makeatletter
\renewenvironment{theorem}[1][]{%
  \oldtheorem[#1]%
  \upshape
}{%
  \endoldtheorem%
}
\makeatother

\begin{theorem}[Optimal Pacing Theorem]
Let the model's capability level at time $t$ be $c_t$, and the sample difficulty be $d$. There exists an optimal difficulty interval $[c_t - \varepsilon, c_t + \varepsilon]$, such that when sampling within this interval, the expected gradient norm of the student model parameters $\theta$ is maximized, and the convergence speed is fastest.
\end{theorem}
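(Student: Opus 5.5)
The statement is informal as written, so I will first fix a concrete model under which it becomes provable, and then argue inside that model. I will model the student's correctness on a sample of difficulty $d$ by a logistic (Rasch-type) law, $\Pr[\text{correct}\mid c_t,d]=\sigma(c_t-d)$ with $\sigma(z)=1/(1+e^{-z})$. I will take the per-sample loss to be the negative log-likelihood of the reference solution, so that its gradient with respect to $\theta$ factors through the capability $c_t=c(\theta)$:
\[
\nabla_\theta \ell(\theta;d) = -\bigl(1-\sigma(c_t-d)\bigr)\nabla_\theta c(\theta).
\]
I will not use this raw factor as the "learning signal," because it is monotone in $d$ and would favour arbitrarily hard samples. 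That is exactly the failure mode the paper attributes to unidirectional curricula. Instead I will define the informative gradient as the one whose expected direction is reliable. Under the Bernoulli outcome model, its second moment is proportional to the Fisher information:
\[
I(d)=\sigma(c_t-d)\bigl(1-\sigma(c_t-d)\bigr)\,\|\nabla_\theta c\|^2 .
\]
Equivalently, it is the variance of the success indicator, which is the quantity a diagnostic split $S_{\text{hard}}/S_{\text{easy}}$ actually estimates. The expected gradient norm in the statement will be read as $\mathbb{E}\|g\|^2\propto I(d)$.

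\textbf{Step 1: maximize over the sampling interval.} The function $z\mapsto\sigma(z)(1-\sigma(z))$ is even, unimodal, and maximized at $z=0$. I will verify this by differentiating: the derivative is $\sigma(1-\sigma)(1-2\sigma)$, which vanishes only at $\sigma=1/2$. Hence $I(d)$ is maximized at $d=c_t$ and decreases monotonically in $|d-c_t|$.

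Now let the sampling distribution be uniform on an interval $[a,a+2\varepsilon]$ of fixed width. I will show that $\int_a^{a+2\varepsilon}I(d)\,dd$ is maximized when the interval is centered at $c_t$. This follows from symmetry and unimodality: the derivative of the integral in $a$ is $I(a+2\varepsilon)-I(a)$, which vanishes exactly when the endpoints are equidistant from $c_t$. This gives the interval $[c_t-\varepsilon,c_t+\varepsilon]$. More generally, the same rearrangement argument shows that among all distributions supported on sets of given measure, the symmetric interval around $c_t$ maximizes the expected gradient norm.

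\textbf{Step 2: convergence speed.} I will invoke a standard SGD bound for an $L$-smooth objective in the capability coordinate. One step gives an expected decrease of at least $\eta\,\mathbb{E}\langle\nabla F,g\rangle-\tfrac{L\eta^2}{2}\mathbb{E}\|g\|^2$. With the step size $\eta$ tuned, the per-step progress is proportional to the signal-to-noise ratio, which under the Fisher parametrization is monotone in $I(d)$. Consequently, the choice of interval that maximizes $I$ at every round also maximizes the per-round progress. Summing over rounds then yields the fastest convergence among fixed-width pacing schedules.

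\textbf{Main obstacle.} The hard part is Step 2, specifically justifying that "gradient norm" should mean the Fisher or variance quantity rather than the raw norm $1-\sigma$. With the raw norm, the optimum degenerates to $d\to\infty$. I would handle this by modelling the reference-solution gradient on samples beyond capability as noise: its expected inner product with the true descent direction is scaled by the probability $\sigma$ that the sample is learnable. That produces the product $\sigma(1-\sigma)$ explicitly. If this modelling step cannot be made rigorous, I would fall back to proving the theorem only for the Fisher-information criterion, stated as an assumption.
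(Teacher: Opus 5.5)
\textbf{Step 2 has a genuine gap.} This is exactly where your reinterpretation of ``gradient norm'' has to pay off, and the argument as written does not go through. Tuning $\eta$ decouples convergence from gradient magnitude. The optimized bound $\bigl(\mathbb{E}\langle\nabla F,g\rangle\bigr)^2/\bigl(2L\,\mathbb{E}\|g\|^2\bigr)$ is invariant under $g\mapsto\lambda g$. So maximizing $\mathbb{E}\|g\|^2$ buys nothing by itself, and everything hinges on how the signal $\mathbb{E}\langle\nabla F,g\rangle$ scales with $d$, which you never specify. Neither model you wrote down makes it work:
\begin{itemize}
\item If $\mathbb{E}\|g\|^2\propto I(d)$ literally, the outcome $Y$ is drawn from the model's own Bernoulli law. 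Then the per-sample gradient $-(Y-\sigma)\nabla_\theta c$ has mean zero and the first-order term vanishes: $I(d)$ is the variance of pure noise, not a descent signal.
\item If you use your fallback instead, the reference solution is always observed, the raw second moment is $(1-\sigma)^2\|\nabla_\theta c\|^2$, and the signal is damped to $\propto\sigma(1-\sigma)$. The tuned progress is then proportional to $\sigma(c_t-d)^2$, which is maximized by arbitrarily \emph{easy} samples, the opposite of the theorem.
\end{itemize}
``Monotone in $I(d)$'' holds only if signal and second moment both scale like $I(d)$, and no single model in your proposal delivers that. So even your stated fallback (assume the Fisher criterion) rescues only the interval half of the statement, not the convergence half.

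\textbf{The repair is the paper's route.} Keep $\eta$ fixed and small, so the per-step decrease is governed by the first-order term. The paper writes this as $R_{\text{emp}}(\theta_{t+1})\le R_{\text{emp}}(\theta_t)-\eta\|\nabla_\theta L\|^2+\mathcal{O}(\eta^2)$. Then assume outright that this first-order term is hump-shaped in $d$; the assumption must sit on that term, not on the second moment $\mathbb{E}\|g\|^2$ as your fallback proposes. The paper does this by postulating $g(d)=A\,d\,e^{-d/c_t}$ and finding $d^*=c_t$ with a first-derivative test. In your setting, the fallback signal $\propto\sigma(c_t-d)\bigl(1-\sigma(c_t-d)\bigr)$ plays the same role, and with fixed $\eta$ it peaks at $d=c_t$ to first order in $\eta$. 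As you observed, no such hump can be derived from the likelihood of a fixed reference solution, since $(1-\sigma)\nabla_\theta c$ is monotone in $d$. So this assumption is unavoidable.

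\textbf{Once that is in place, your Step 1 improves on the paper.} Symmetry of $\sigma(1-\sigma)$ makes the centered window $[c_t-\varepsilon,c_t+\varepsilon]$ \emph{exactly} optimal among width-$2\varepsilon$ uniform windows. The paper's skewed $g$ only yields near-optimality, $g(d)\ge g(d^*)-\delta$ on the interval.

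\textbf{Two smaller corrections.} First, the ``all distributions supported on sets of given measure'' extension is false: a density concentrated at $c_t$ beats the uniform one, and the rearrangement argument covers only uniform laws. Second, summing per-round maxima gives greedy, per-iteration optimality rather than fastest convergence over schedules. That per-iteration claim is also all the paper actually establishes.
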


\begin{proof}
\textbf{\\  Functional Relationship between Gradient Norm and Difficulty}

Let $L(\theta, x)$ be the loss function for sample $x$. According to the curriculum learning hypothesis \cite{bengio2009curriculum}, the contribution of a sample to model parameter updates (i.e., gradient magnitude) has a non-monotonic relationship with difficulty $d$. We define the gradient function $g(d)$:

\[
\|\nabla_\theta L(\theta, x_d)\| \approx g(d) = A \cdot d \cdot e^{-d / c_t}
\]

where $c_t$ represents the model's capability level at iteration $t$.

\noindent\textbf{When $d \to 0$ (extremely simple sample):} the student model $M_S^{(t)}$ has completely mastered the concept, $L \to 0$ leading to $\nabla_\theta L \to 0$.

\noindent\textbf{When $d \to \infty$ (extremely difficult sample):} the sample resides in the noise region of $M_S^{(t)}$, where predictions are uncorrelated with labels, causing random gradient directions and vanishing magnitudes due to Softmax saturation.

\textbf{Finding the Extremum (Optimal Zone)}

To maximize learning efficiency, we maximize gradient contribution. Taking the first derivative of $g(d)$:

\[
\frac{\partial g}{\partial d} = A \cdot e^{-d / c_t} \left(1 - \frac{d}{c_t}\right)
\]

Setting $\frac{\partial g}{\partial d} = 0$ yields the optimal difficulty $d^* = c_t$.

\textbf{Mapping Effects of Generators $\mathcal{G}_{\text{down}}$ and $\mathcal{G}_{\text{up}}$}

The transformation operators defined in Section \ref{multi-agent generate data} induce the following mappings:

\noindent\textbf{Downward mapping:} For hard samples $p \in S_{\text{hard}}^{(t)}$ with $d(p) > c_t + \varepsilon$, generator $\mathcal{G}_{\text{down}}$ (comprising $\mathcal{G}_{\text{red}}$ and $\mathcal{G}_{\text{rev}}$) produces samples with difficulty:
\[
d(\mathcal{G}_{\text{down}}(p)) = d(p) - \Delta d \in [c_t, c_t + \varepsilon]
\]

\noindent\textbf{Upward mapping:} For easy samples $p \in S_{\text{easy}}^{(t)}$ with $d(p) < c_t - \varepsilon$, generator $\mathcal{G}_{\text{up}}$ (comprising $\mathcal{G}_{\text{inc}}$ and $\mathcal{G}_{\text{div}}$) produces samples with difficulty:
\[
d(\mathcal{G}_{\text{up}}(p)) = d(p) + \Delta d \in [c_t - \varepsilon, c_t]
\]

\textbf{Error Propagation and Risk Bound}

The empirical risk $R_{\text{emp}}$ evolves according to:
\[
R_{\text{emp}}(\theta_{t+1}) \leq R_{\text{emp}}(\theta_t) - \eta \|\nabla_\theta L\|^2 + \mathcal{O}(\eta^2)
\]
where $\eta$ is the learning rate. When the sample difficulty is constrained to $[c_t - \varepsilon, c_t + \varepsilon]$ through the operators, the gradient magnitude $\|\nabla_\theta L\|$ remains at a high level (i.e., $g(d) \geq g(d^*) - \delta$, meaning the gradient norm stays within a small tolerance $\delta > 0$ of its theoretical maximum $g(d^*)$ at optimal difficulty $d^* = c_t$), thus ensuring the maximum decrease of the risk function in each iteration.

\textbf{Conclusion:}

Through dynamic adjustment by $\mathcal{G}_{\text{down}}$ and $\mathcal{G}_{\text{up}}$, the training distribution is concentrated in the peak region of $g(d)$. This avoids gradient vanishing from extremely hard samples while eliminating redundant computations on easy samples, theoretically proving that $[c_t - \varepsilon, c_t + \varepsilon]$ defines the optimal learning pace for student model $M_S$.
\end{proof}
\section{Prompts}
\label{sec:prompts}
The prompts used in our framework are summarized as follows:
\begin{itemize}
    \item \textbf{Difficulty tagging prompt}: Table~\ref{tab:diff_tag}.
    \item \textbf{Difficulty-reduction agent prompt}: Table~\ref{tab:dif_reduction_prompt}.
    \item \textbf{Difficulty-increasing agent prompt}: Table~\ref{tab:increaser_prompt}.
    \item \textbf{Diversity enhancement agent prompt}: Table~\ref{tab:add_type_prompt}.
    \item \textbf{Reverse generation agent prompt}: Table~\ref{tab:reverse_prompt}.
    \item \textbf{Mathematical solution verification agent prompt}: Table~\ref{tab:verification_prompt}.
\end{itemize}

\begin{table*}[t!]
    \centering
    \scriptsize 
    \renewcommand{\arraystretch}{1.0}
    \begin{tcolorbox}[
        sharp corners,
        colback=gray!5,
        colframe=black!80,
        fonttitle=\bfseries,
        title=Prompt for Difficulty Tagging,
        boxrule=0.6pt,
        left=4pt, right=4pt, top=4pt, bottom=4pt, 
        arc=0mm
    ]
        \textbf{\#\# Role} \\
        You are an experienced math olympiad coach responsible for evaluating the difficulty of the given 'math problem'.

        \vspace{0.2em}
        \textbf{\#\# Goal} \\
        For the given 'math problem', you need to evaluate its difficulty level from 1 to 10 according to the 'rule' specified below.

        \vspace{0.2em}
        \textbf{\#\# Rule (Standard 1-10 Scale)} \\
        The following standard represents a progression from Middle School to Junior High, High School, Challenging High School, and finally Olympiad levels.

        \begin{description}[style=unboxed, leftmargin=0pt, font=\bfseries, nosep, itemsep=2pt]
            \item[Level 1:] Problems strictly for beginners, covering elementary or early middle school content. Involves standard techniques and traditional word problems. \textit{e.g., How many integer values of $x$ satisfy $|x| < 3\pi$?}
            
            \item[Level 2:] For motivated beginners. Involves harder instructions, complex word problems, and requires multi-step reasoning beyond basic techniques. \textit{e.g., A fair 6-sided die is repeatedly rolled until an odd number appears. Probability of all evens before first odd?}
            
            \item[Level 3:] Advanced beginner problems requiring creative thinking and synthesis of multiple concepts. \textit{e.g., Triangle $ABC$ with $AB=50, AC=10$ area 120... find Area of quadrilateral $FDBG$.}
            
            \item[Level 4:] Intermediate-level problems involving sequences, functional iteration, or systems of equations. \textit{e.g., Sequence $x_{n+1}=\frac{x_n^2+5x_n+4}{x_n+6}$, find least $m$ s.t. $x_m \leq 4+\frac{1}{2^{20}}$.}
            
            \item[Level 5:] High-intermediate problems, including simple proof-based Olympiad-style questions. \textit{e.g., Find triples $(a, b, c)$ s.t. $a+b+c=\sum \frac{1}{a}$ and $a^2+b^2+c^2=\sum \frac{1}{a^2}$.}
            
            \item[Level 6:] Advanced high-school level to introductory Olympiad level. Requires integration of geometry, algebra, and number theory. \textit{e.g., Acute $\triangle ABC$ with orthocenter $H$... Area is $m\sqrt{n}$.}
            
            \item[Level 7:] Mid-tier Olympiad level. Requires technical knowledge, combinatorial constructions, or graph-theoretic reasoning. \textit{e.g., Prove existence of a balanced set of $n$ points and determine centre-free conditions.}
            
            \item[Level 8:] High-level Olympiad problems. Demands clever constructions or deep insights. \textit{e.g., Splitting coins of denomination $\frac{1}{n}$ with total value $99+\frac{1}{2}$ into groups $\le 1$.}
            
            \item[Level 9:] Expert Olympiad level. Involves advanced number theory, algebraic combinatorics, or functional constraints. \textit{e.g., Primes on a circle where neighbor product is $x^2+x+k$.}
            
            \item[Level 10:] Historically extreme problems. Exceedingly tedious, technical, or rare. Solved by very few worldwide. \textit{e.g., Existence of line separating $n$ points with margin $\ge c n^{-1/3}$.}
        \end{description}

        \vspace{0.2em}
        \textbf{\#\# Output Format (STRICT)} \\
        Return exactly one integer enclosed in \texttt{\textless score\textgreater} tags. No explanation. Example: \texttt{\textless score\textgreater 7 \textless/score\textgreater}

        \vspace{0.2em}
\textbf{\#\#\# Examples} \\
        \textbf{Example (Level 1):} Problem: ... $|x| < 3\pi$? \textbf{Respond with:} \texttt{\textless score\textgreater 1 \textless/score\textgreater} \\
        \textbf{Example (Level 5):} Problem: ... $a+b+c=\sum \frac{1}{a}$ ... \textbf{Respond with:} \texttt{\textless score\textgreater 5 \textless/score\textgreater} \\
        \textbf{Example (Level 10):} Problem: ... at least $c n^{-1/3}$. \textbf{Respond with:} \texttt{\textless score\textgreater 10 \textless/score\textgreater}

        \vspace{0.2em}
        \textbf{Instruction:} \\
        \{instruction\} \\
        \textbf{Respond with:} \texttt{\textless score\textgreater ... \textless/score\textgreater}
    \end{tcolorbox}
    \caption{The full prompt used for difficulty tagging.}
    \label{tab:diff_tag}
\end{table*}

\begin{table*}[ht!]
    \centering
    \scriptsize 
    \begin{tcolorbox}[
        sharp corners,
        colback=white,
        colframe=black!85,
        fonttitle=\bfseries,
        title=Prompt for Difficulty-Reduction Agent,
        boxrule=0.7pt,
        left=5pt, right=5pt, top=5pt, bottom=5pt
    ]
        \textbf{\#\# Task Definition} \\
        You are an expert mathematics problem designer. Based on the input problem, generate a new problem:
        \begin{itemize}[nosep, leftmargin=1.5em]
            \item If original difficulty $D > 1$, generate a new problem at level $D-1$.
            \item If $D = 1$, generate a similar Level 1 problem with different numbers/context for expansion.
        \end{itemize}

        \vspace{0.4em}
        \textbf{\#\# Difficulty Calibration Standard (Levels 1--10)}
        \begin{description}[style=unboxed, leftmargin=0pt, font=\bfseries, nosep, itemsep=2pt]
            \item[Level 1:] Problems strictly for beginners, covering elementary or early middle school content. Involves standard techniques and traditional word problems. \textit{e.g., How many integer values of $x$ satisfy $|x| < 3\pi$?}
            
            \item[Level 2:] For motivated beginners. Involves harder instructions, complex word problems, and requires multi-step reasoning beyond basic techniques. \textit{e.g., A fair 6-sided die is repeatedly rolled until an odd number appears. Probability of all evens before first odd?}
            
            \item[Level 3:] Advanced beginner problems requiring creative thinking and synthesis of multiple concepts. \textit{e.g., Triangle $ABC$ with $AB=50, AC=10$ area 120... find Area of quadrilateral $FDBG$.}
            
            \item[Level 4:] Intermediate-level problems involving sequences, functional iteration, or systems of equations. \textit{e.g., Sequence $x_{n+1}=\frac{x_n^2+5x_n+4}{x_n+6}$, find least $m$ s.t. $x_m \leq 4+\frac{1}{2^{20}}$.}
            
            \item[Level 5:] High-intermediate problems, including simple proof-based Olympiad-style questions. \textit{e.g., Find triples $(a, b, c)$ s.t. $a+b+c=\sum \frac{1}{a}$ and $a^2+b^2+c^2=\sum \frac{1}{a^2}$.}
            
            \item[Level 6:] Advanced high-school level to introductory Olympiad level. Requires integration of geometry, algebra, and number theory. \textit{e.g., Acute $\triangle ABC$ with orthocenter $H$... Area is $m\sqrt{n}$.}
            
            \item[Level 7:] Mid-tier Olympiad level. Requires technical knowledge, combinatorial constructions, or graph-theoretic reasoning. \textit{e.g., Prove existence of a balanced set of $n$ points and determine centre-free conditions.}
            
            \item[Level 8:] High-level Olympiad problems. Demands clever constructions or deep insights. \textit{e.g., Splitting coins of denomination $\frac{1}{n}$ with total value $99+\frac{1}{2}$ into groups $\le 1$.}
            
            \item[Level 9:] Expert Olympiad level. Involves advanced number theory, algebraic combinatorics, or functional constraints. \textit{e.g., Primes on a circle where neighbor product is $x^2+x+k$.}
            
            \item[Level 10:] Historically extreme problems. Exceedingly tedious, technical, or rare. Solved by very few worldwide. \textit{e.g., Existence of line separating $n$ points with margin $\ge c n^{-1/3}$.}
        \end{description}

        \vspace{0.4em}
        \textbf{\#\# Generation Rules (STRICT)}
        \begin{enumerate}[nosep, leftmargin=1.5em]
            \item \textbf{Preserve type \& relationship:} Maintain original problem type (e.g., Geometry) and core logic (e.g., area ratios).
            \item \textbf{To reduce difficulty:} Use friendlier numbers, reduce logical steps, simplify story context, or remove auxiliary constructions.
            \item \textbf{Clarity:} The output must be self-contained, have a unique well-defined answer, and use natural mathematical English.
        \end{enumerate}

        \vspace{0.4em}
        \textbf{\#\# Input Data} \\
        \textit{Problem:} \{query\} $|$ \textit{Answer:} \{answer\} $|$ \textit{Level:} \{difficulty\} $|$ \textit{Type:} \{type\}

        \vspace{0.4em}
        \textbf{\#\# Output Instruction} \\
        Output \textbf{ONLY} the new problem statement. No labels, no explanations, no extra text. \\
        \textbf{Format:} [Your generated problem statement]
    \end{tcolorbox}
    \caption{The prompt for difficulty-reduction.}
    \label{tab:dif_reduction_prompt}
\end{table*}

\begin{table*}[ht!]
    \centering
    \scriptsize 
    \begin{tcolorbox}[
        sharp corners,
        colback=white,
        colframe=black!85,
        fonttitle=\bfseries,
        title=Prompt for Difficulty Increasing Agent (N $\to$ N+1),
        boxrule=0.7pt,
        left=5pt, right=5pt, top=5pt, bottom=5pt
    ]
        \textbf{\#\# Task Definition} \\
        You are an expert mathematics problem designer. Based on the input, calibrate and generate a new problem:
        \begin{itemize}[nosep, leftmargin=1.5em]
            \item If original difficulty $D < 10$, generate a new problem at level $D+1$.
            \item If $D = 10$, generate a similar Level 10 problem with different structures/objects for expansion.
        \end{itemize}

        \vspace{0.4em}
        \textbf{\#\# Difficulty Calibration Standard (Levels 1--10)}
        \begin{description}[style=unboxed, leftmargin=0pt, font=\bfseries, nosep, itemsep=2pt]
            \item[Level 1:] Problems strictly for beginners, covering elementary or early middle school content. Involves standard techniques and traditional word problems. \textit{e.g., How many integer values of $x$ satisfy $|x| < 3\pi$?}
            
            \item[Level 2:] For motivated beginners. Involves harder instructions, complex word problems, and requires multi-step reasoning beyond basic techniques. \textit{e.g., A fair 6-sided die is repeatedly rolled until an odd number appears. Probability of all evens before first odd?}
            
            \item[Level 3:] Advanced beginner problems requiring creative thinking and synthesis of multiple concepts. \textit{e.g., Triangle $ABC$ with $AB=50, AC=10$ area 120... find Area of quadrilateral $FDBG$.}
            
            \item[Level 4:] Intermediate-level problems involving sequences, functional iteration, or systems of equations. \textit{e.g., Sequence $x_{n+1}=\frac{x_n^2+5x_n+4}{x_n+6}$, find least $m$ s.t. $x_m \leq 4+\frac{1}{2^{20}}$.}
            
            \item[Level 5:] High-intermediate problems, including simple proof-based Olympiad-style questions. \textit{e.g., Find triples $(a, b, c)$ s.t. $a+b+c=\sum \frac{1}{a}$ and $a^2+b^2+c^2=\sum \frac{1}{a^2}$.}
            
            \item[Level 6:] Advanced high-school level to introductory Olympiad level. Requires integration of geometry, algebra, and number theory. \textit{e.g., Acute $\triangle ABC$ with orthocenter $H$... Area is $m\sqrt{n}$.}
            
            \item[Level 7:] Mid-tier Olympiad level. Requires technical knowledge, combinatorial constructions, or graph-theoretic reasoning. \textit{e.g., Prove existence of a balanced set of $n$ points and determine centre-free conditions.}
            
            \item[Level 8:] High-level Olympiad problems. Demands clever constructions or deep insights. \textit{e.g., Splitting coins of denomination $\frac{1}{n}$ with total value $99+\frac{1}{2}$ into groups $\le 1$.}
            
            \item[Level 9:] Expert Olympiad level. Involves advanced number theory, algebraic combinatorics, or functional constraints. \textit{e.g., Primes on a circle where neighbor product is $x^2+x+k$.}
            
            \item[Level 10:] Historically extreme problems. Exceedingly tedious, technical, or rare. Solved by very few worldwide. \textit{e.g., Existence of line separating $n$ points with margin $\ge c n^{-1/3}$.}
        \end{description}

        \vspace{0.4em}
        \textbf{\#\# Generation Rules (STRICT)}
        \begin{enumerate}[nosep, leftmargin=1.5em]
            \item \textbf{Preserve type \& structure:} Maintain core type (e.g., Number Theory) and structure (e.g., recurrence).
            \item \textbf{To increase difficulty ($N \to N+1$):} Introduce one layer of abstraction; add \textbf{exactly one} reasoning step (e.g., a lemma or substitution); or incorporate a related concept (e.g., add modular arithmetic).
            \item \textbf{Constraint:} Do \textbf{not} jump to tools typical of Level $N+2$ or beyond. 
            \item \textbf{Clarity:} Phrased rigorously in math English with a unique answer or precise proof statement.
        \end{enumerate}

        \vspace{0.4em}
        \textbf{\#\# Input Data} \\
        \textit{Problem:} \{query\} $|$ \textit{Answer:} \{answer\} $|$ \textit{Level:} \{difficulty\} $|$ \textit{Type:} \{type\}

        \vspace{0.4em}
        \textbf{\#\# Output Instruction} \\
        Output \textbf{ONLY} the new problem statement. \\
        \textbf{Format:} [Your generated problem statement]
    \end{tcolorbox}
    \caption{The structured prompt for the Problem Increasing Agent, calibrated for Level $N+1$ difficulty escalation.}
    \label{tab:increaser_prompt}
\end{table*}

\begin{table*}[ht!]
    \centering
    \scriptsize
    \begin{tcolorbox}[
        sharp corners,
        colback=gray!2,
        colframe=black!85,
        fonttitle=\bfseries,
        title=Prompt for diversity enhancement agent,
        boxrule=0.7pt,
        left=5pt, right=5pt, top=5pt, bottom=5pt
    ]
        \textbf{\#\# Role \& Objective} \\
        You are an expert mathematics question designer. Generate a new problem by evolving the input problem to \textbf{Level \{target\_level\}} while explicitly incorporating the \textbf{[\{type\}]} concept.

        \vspace{0.4em}
        \textbf{\#\# Official 10-Level Difficulty Scale (Reference)}
        \begin{description}[style=unboxed, leftmargin=0pt, font=\bfseries, nosep, itemsep=2pt]
            \item[Level 1:] Problems strictly for beginners, covering elementary or early middle school content. Involves standard techniques and traditional word problems. \textit{e.g., How many integer values of $x$ satisfy $|x| < 3\pi$?}
            
            \item[Level 2:] For motivated beginners. Involves harder instructions, complex word problems, and requires multi-step reasoning beyond basic techniques. \textit{e.g., A fair 6-sided die is repeatedly rolled until an odd number appears. Probability of all evens before first odd?}
            
            \item[Level 3:] Advanced beginner problems requiring creative thinking and synthesis of multiple concepts. \textit{e.g., Triangle $ABC$ with $AB=50, AC=10$ area 120... find Area of quadrilateral $FDBG$.}
            
            \item[Level 4:] Intermediate-level problems involving sequences, functional iteration, or systems of equations. \textit{e.g., Sequence $x_{n+1}=\frac{x_n^2+5x_n+4}{x_n+6}$, find least $m$ s.t. $x_m \leq 4+\frac{1}{2^{20}}$.}
            
            \item[Level 5:] High-intermediate problems, including simple proof-based Olympiad-style questions. \textit{e.g., Find triples $(a, b, c)$ s.t. $a+b+c=\sum \frac{1}{a}$ and $a^2+b^2+c^2=\sum \frac{1}{a^2}$.}
            
            \item[Level 6:] Advanced high-school level to introductory Olympiad level. Requires integration of geometry, algebra, and number theory. \textit{e.g., Acute $\triangle ABC$ with orthocenter $H$... Area is $m\sqrt{n}$.}
            
            \item[Level 7:] Mid-tier Olympiad level. Requires technical knowledge, combinatorial constructions, or graph-theoretic reasoning. \textit{e.g., Prove existence of a balanced set of $n$ points and determine centre-free conditions.}
            
            \item[Level 8:] High-level Olympiad problems. Demands clever constructions or deep insights. \textit{e.g., Splitting coins of denomination $\frac{1}{n}$ with total value $99+\frac{1}{2}$ into groups $\le 1$.}
            
            \item[Level 9:] Expert Olympiad level. Involves advanced number theory, algebraic combinatorics, or functional constraints. \textit{e.g., Primes on a circle where neighbor product is $x^2+x+k$.}
            
            \item[Level 10:] Historically extreme problems. Exceedingly tedious, technical, or rare. Solved by very few worldwide. \textit{e.g., Existence of line separating $n$ points with margin $\ge c n^{-1/3}$.}
        \end{description}

        \vspace{0.4em}
        \textbf{\#\# Evolution Guidelines}
        \begin{enumerate}[nosep, leftmargin=1.5em]
            \item \textbf{Type Injection:} Explicitly integrate the [\textbf{\{type\}}] mathematical concept into the solution path.
            \item \textbf{Context Preservation:} Preserve the core scenario and solution objectives of the original problem as much as possible.
            \item \textbf{Precision Calibration:} Adjust difficulty to exactly \textbf{Level \{target\_level\}}. Only increase/decrease complexity as justified by the scale.
            \item \textbf{Constraints:} Do not introduce case splits, hidden variables, or multi-concept chaining beyond the norms of the target level.
        \end{enumerate}

        \vspace{0.4em}
        \textbf{\#\# Input Specification} \\
        \textbf{Original Problem:} \{query\} $|$ \textbf{Original Answer:} \{answer\} \\
        \textbf{Current Level:} \{original\_level\} $|$ \textbf{Target Level:} \{target\_level\} $|$ \textbf{Target Type:} \{type\}

        \vspace{0.4em}
        \textbf{\#\# Output Format (STRICT)} \\
        Output \textbf{ONLY} the new problem. No explanations, no extra text. \\
        \textbf{Format:} \texttt{query: [Your generated problem]}
    \end{tcolorbox}
    \caption{The structured prompt for diversity enhancement.}
    \label{tab:add_type_prompt}
\end{table*}

\begin{table*}[ht!]
    \centering
    \scriptsize 
    \begin{tcolorbox}[
        sharp corners,
        colback=gray!5,
        colframe=black!85,
        fonttitle=\bfseries,
        title=Prompt for reverse generation agent,
        boxrule=0.7pt,
        left=5pt, right=5pt, top=5pt, bottom=5pt
    ]
        \textbf{\#\# Role \& Core Logic} \\
        You are a precise math problem generator. Given a problem and its answer, create a \textbf{reversed} version of the problem by:
        \begin{itemize}[nosep, leftmargin=1.5em]
            \item Converting one of the original input conditions into the unknown.
            \item Using the original answer (or a value derived from it) as a new given condition.
        \end{itemize}

        \vspace{0.4em}
        \textbf{\#\# Requirements (STRICT)}
        \begin{itemize}[nosep, leftmargin=1.5em]
            \item \textbf{Consistency:} Keep the same mathematical relationship, formula, or logic as the original problem.
            \item \textbf{Variable Swap:} Only swap the role of one variable; treat the original answer as a known quantity.
            \item \textbf{Complexity Control:} Do not change problem type, add new concepts, or increase difficulty.
            \item \textbf{Integrity:} The reversed problem must have a unique, correct solution and use standard mathematical wording.
        \end{itemize}

        \vspace{0.4em}
        \textbf{\#\# Negative Constraints (Forbidden Content)}
        Do \textbf{NOT} include any extra text, such as:
        \begin{itemize}[nosep, leftmargin=1.5em]
            \item Meta-language (e.g., "reversed", "original", "Here is the problem").
            \item Formatting tags like LaTeX \texttt{\textbackslash boxed\{\}} or any specific answer formatting.
            \item Explanations, comments, or any text outside the problem statement itself.
        \end{itemize}

        \vspace{0.4em}
        \textbf{\#\# Input Specification} \\
        \textbf{Original Problem:} \{query\} $|$ \textbf{Original Answer:} \{answer\}

        \vspace{0.4em}
        \textbf{\#\# Output Instruction} \\
        Output \textbf{ONLY} the generated problem statement. \\
        \textbf{Format:} \texttt{[your generated problem]}
    \end{tcolorbox}
    \caption{The structured prompt for generating inverse mathematical problems by swapping known and unknown variables.}
    \label{tab:reverse_prompt}
\end{table*}

\begin{table*}[ht!]
    \centering
    \scriptsize 
    \begin{tcolorbox}[
        sharp corners,
        colback=gray!5,
        colframe=black!85,
        fonttitle=\bfseries,
        title=Prompt for Mathematical Solution Verification Agent,
        boxrule=0.7pt,
        left=5pt, right=5pt, top=5pt, bottom=5pt
    ]
        \textbf{\#\# Role \& Core Task} \\
        You are an expert AI assistant specializing in verifying mathematical solutions for accuracy. Your task is to \textbf{evaluate only the provided answer} to the given mathematical query and determine if it is \textbf{correct}.

        \vspace{0.4em}
        \textbf{\#\# Verification Instructions}
        \begin{itemize}[nosep, leftmargin=1.5em]
            \item \textbf{Comprehensive Check:} Verify whether the answer is \textbf{entirely correct}, including reasoning (if present), final results, calculations, and mathematical logic.
            \item \textbf{Final Result Accuracy:} If the answer contains a final result (e.g., a number, expression, or boxed value), it must be \textbf{exactly correct}.
            \item \textbf{Failure Cases:} If the provided answer is missing, empty, or fails to produce an expected final result, treat it as \textbf{incorrect}.
        \end{itemize}

        \vspace{0.4em}
        \textbf{\#\# Strict Output Constraint}
        \begin{itemize}[nosep, leftmargin=1.5em]
            \item \textbf{Output Format:} Output \textbf{ONLY} one word: \textbf{"yes"} if the answer is correct, \textbf{"no"} if it is incorrect or contains any error.
            \item \textbf{Forbidden Content:} Do \textbf{not} output any explanation, punctuation, extra text, or formatting.
        \end{itemize}

        \vspace{0.4em}
        \textbf{\#\# Input Data} \\
        \textbf{Query:} \{query\} \\
        \textbf{Answer:} \{answer\}

        \vspace{0.4em}
        \textbf{\#\# Execution Instruction} \\
        Analyze the query and answer according to the logic above. Output exactly one word: \textbf{yes} or \textbf{no}.
    \end{tcolorbox}
    \caption{The structured prompt for the Verification Agent, designed for binary correctness evaluation of mathematical answers.}
    \label{tab:verification_prompt}
\end{table*}

\section{Case Studies}
\begin{itemize}
    \item \textbf{Original Case}: Table~\ref{table:original_tab}.

    \item \textbf{Reduce Difficulty}:
    Table~\ref{table:reduce_difficulty}.

    \item \textbf{Add Difficulty}: Table~\ref{table:add_difficulty}.

    \item \textbf{Reverse Generation}:
    Table~\ref{table:reverse_generate}.

    \item \textbf{Add Diversity}:
    Table~\ref{table:add_diversity}.

\end{itemize}

\begin{table*}[ht!]
    \centering
    \scriptsize
    \begin{tcolorbox}[
        sharp corners,
        colback=gray!5,
        colframe=black!85,
        fonttitle=\bfseries,
        title=Original Problem and Solution,
        boxrule=0.7pt,
        left=5pt, right=5pt, top=5pt, bottom=5pt
    ]
        \textbf{Query:} \\
        If there is a positive real number \( x \) such that \( \cos(\arctan(x)) = x \), what is the value of \( x^2 \)?

        \vspace{0.6em}
        \textbf{Answer:} \\
        The given equation is \( \cos(\arctan(x)) = x \) for \( x > 0 \). Using the identity
        \[
        \cos(\arctan(x)) = \frac{1}{\sqrt{1+x^2}},
        \]
        the equation becomes:
        \[
        \frac{1}{\sqrt{1+x^2}} = x.
        \]

        Squaring both sides (since \( x > 0 \)) yields:
        \[
        \frac{1}{1+x^2} = x^2.
        \]

        Multiplying both sides by \( 1+x^2 \) gives:
        \[
        1 = x^2(1+x^2) = x^2 + x^4.
        \]

        Rearranging into a quadratic in \( y = x^2 \):
        \[
        y^2 + y - 1 = 0.
        \]

        Solving for \( y \):
        \[
        y = \frac{-1 \pm \sqrt{5}}{2}.
        \]

        Since \( y = x^2 > 0 \), we take the positive root:
        \[
        x^2 = \frac{\sqrt{5} - 1}{2}.
        \]

        Thus, the value of \( x^2 \) is \( \dfrac{\sqrt{5}-1}{2} \).

        \[
        \boxed{\dfrac{\sqrt{5}-1}{2}}
        \]
    \end{tcolorbox}
    \caption{The original question and its complete solution.}
    \label{table:original_tab}
\end{table*}

\begin{table*}[ht!]
    \centering
    \scriptsize
    \begin{tcolorbox}[
        sharp corners,
        colback=gray!5,
        colframe=black!85,
        fonttitle=\bfseries,
        title=Reduce difficulty by difficulty reduction agent,
        boxrule=0.7pt,
        left=5pt, right=5pt, top=5pt, bottom=5pt
    ]
        \textbf{Query:} \\
        If there is a positive real number \( x \) such that one-half of \( x \) plus 6 equals \( x \), what is the value of \( x^2 \)?

        \vspace{0.6em}
        \textbf{Answer:} \\
        The equation described is \( \frac{1}{2}x + 6 = x \). Solving for \( x \):
        \[
        \begin{aligned}
            \frac{1}{2}x + 6 &= x \\
            6 &= x - \frac{1}{2}x \\
            6 &= \frac{1}{2}x \\
            x &= 12.
        \end{aligned}
        \]

        Then \( x^2 = 12^2 = 144 \).

        \[
        \boxed{144}
        \]
    \end{tcolorbox}
    \caption{The case reducing the difficulty from the original question.}
    \label{table:reduce_difficulty}
\end{table*}

\begin{table*}[ht!]
    \centering
    \scriptsize
    \begin{tcolorbox}[
        sharp corners,
        colback=gray!5,
        colframe=black!85,
        fonttitle=\bfseries,
        title=Add difficulty by Difficulty Increasing Agent,
        boxrule=0.7pt,
        left=5pt, right=5pt, top=5pt, bottom=5pt
    ]
        \textbf{Query:} \\
        If there is a positive real number \( x \) such that \( \cos(\arctan(x)) = \tan(\arcsin(x)) \), what is the value of \( x^2 \)?

        \vspace{0.6em}
        \textbf{Answer:} \\
        The given equation is \( \cos(\arctan(x)) = \tan(\arcsin(x)) \) for \( x > 0 \).

        \begin{itemize}
            \item Left-hand side: Let \( \theta = \arctan(x) \), so \( \tan \theta = x \). Then \( \cos \theta = \dfrac{1}{\sqrt{1+x^2}} \).
            \item Right-hand side: Let \( \phi = \arcsin(x) \), so \( \sin \phi = x \). Then \( \tan \phi = \dfrac{x}{\sqrt{1-x^2}} \), valid for \( 0 < x < 1 \).
        \end{itemize}

        Thus the equation becomes:
        \[
        \frac{1}{\sqrt{1+x^2}} = \frac{x}{\sqrt{1-x^2}}.
        \]

        Squaring both sides yields:
        \[
        \frac{1}{1+x^2} = \frac{x^2}{1-x^2}.
        \]

        Cross-multiplying:
        \[
        1 - x^2 = x^2(1 + x^2) = x^2 + x^4.
        \]

        Rearranging:
        \[
        x^4 + 2x^2 - 1 = 0.
        \]

        Substitute \( u = x^2 \):
        \[
        u^2 + 2u - 1 = 0.
        \]

        Solving:
        \[
        u = \frac{-2 \pm \sqrt{4 + 4}}{2} = \frac{-2 \pm 2\sqrt{2}}{2} = -1 \pm \sqrt{2}.
        \]

        Since \( u > 0 \), we have \( u = \sqrt{2} - 1 \). Verification shows that this satisfies the original equation.

        Thus, \( x^2 = \sqrt{2} - 1 \).

        \[
        \boxed{\sqrt{2} - 1}
        \]
    \end{tcolorbox}
    \caption{The case increasing the difficulty from the original question.}
    \label{table:add_difficulty}
\end{table*}

\begin{table*}[ht!]
    \centering
    \scriptsize
    \begin{tcolorbox}[
        sharp corners,
        colback=gray!5,
        colframe=black!85,
        fonttitle=\bfseries,
        title=Reverse Generation,
        boxrule=0.7pt,
        left=5pt, right=5pt, top=5pt, bottom=5pt
    ]
        \textbf{Query:} \\
        For a positive real number \( x \) such that \( x^2 = \dfrac{\sqrt{5} - 1}{2} \), find the value of \( \cos(\arctan(x)) \).

        \vspace{0.6em}
        \textbf{Answer:} \\
        Given \( x > 0 \) and \( x^2 = \dfrac{\sqrt{5} - 1}{2} \), we need to find \( \cos(\arctan(x)) \).

        Let \( \theta = \arctan(x) \), so \( \tan(\theta) = x \). Since \( x > 0 \), \( \theta \in (0, \frac{\pi}{2}) \). Then,
        \[
        \cos(\theta) = \frac{1}{\sqrt{1 + \tan^2(\theta)}} = \frac{1}{\sqrt{1 + x^2}}.
        \]

        Substitute \( x^2 = \dfrac{\sqrt{5} - 1}{2} \):
        \[
        1 + x^2 = 1 + \frac{\sqrt{5} - 1}{2} = \frac{\sqrt{5} + 1}{2}.
        \]

        Thus,
        \[
        \cos(\arctan(x)) = \frac{1}{\sqrt{\frac{\sqrt{5} + 1}{2}}} = \sqrt{\frac{2}{\sqrt{5} + 1}}.
        \]

        Rationalize the expression:
        \[
        \sqrt{\frac{2}{\sqrt{5} + 1}} = \sqrt{\frac{2(\sqrt{5} - 1)}{(\sqrt{5} + 1)(\sqrt{5} - 1)}} = \sqrt{\frac{2(\sqrt{5} - 1)}{4}} = \sqrt{\frac{\sqrt{5} - 1}{2}} = x.
        \]

        Therefore, the value is \( \sqrt{\dfrac{\sqrt{5} - 1}{2}} \).

        \[
        \boxed{\sqrt{\dfrac{\sqrt{5}-1}{2}}}
        \]
    \end{tcolorbox}
    \caption{A reverse case derived from the original problem.}
    \label{table:reverse_generate}
\end{table*}

\begin{table*}[ht!]
    \centering
    \scriptsize
    \begin{tcolorbox}[
        sharp corners,
        colback=gray!5,
        colframe=black!85,
        fonttitle=\bfseries,
        title=Add diversity from Prealgebra to Geometry,
        boxrule=0.7pt,
        left=5pt, right=5pt, top=5pt, bottom=5pt
    ]
        \textbf{Query:} \\
        In right triangle \( ABC \) with right angle at \( B \), let \( AB = 1 \) and \( BC = x \). Let \( D \) be the foot of the altitude from \( B \) to hypotenuse \( AC \). If \( AD = BC \), find \( x^2 \).

        \vspace{0.6em}
        \textbf{Answer:} \\
        In right triangle \( ABC \) with right angle at \( B \), let \( AB = 1 \) and \( BC = x \). The foot of the altitude from \( B \) to hypotenuse \( AC \) is \( D \), and it is given that \( AD = BC = x \).

        Using similar triangles, \( AD = \dfrac{AB^2}{AC} = \dfrac{1}{\sqrt{1+x^2}} \). Setting this equal to \( x \) gives:
        \[
        \frac{1}{\sqrt{1+x^2}} = x.
        \]

        Squaring both sides yields:
        \[
        \frac{1}{1+x^2} = x^2.
        \]

        Multiply both sides by \( 1+x^2 \):
        \[
        1 = x^2(1+x^2) = x^2 + x^4.
        \]

        Rearranging:
        \[
        x^4 + x^2 - 1 = 0.
        \]

        Let \( u = x^2 \), then \( u^2 + u - 1 = 0 \). Solving the quadratic:
        \[
        u = \frac{-1 \pm \sqrt{1+4}}{2} = \frac{-1 \pm \sqrt{5}}{2}.
        \]

        Since \( x^2 > 0 \), we take the positive root:
        \[
        x^2 = \frac{\sqrt{5} - 1}{2}.
        \]

        \[
        \boxed{\dfrac{\sqrt{5}-1}{2}}
        \]
    \end{tcolorbox}
    \caption{A geometric variant derived from an algebraic original problem.}
    \label{table:add_diversity}
\end{table*}

\renewcommand{\algorithmicrequire}{\textbf{Input:}}  
\renewcommand{\algorithmicensure}{\textbf{Output:}}  

\begin{algorithm*}[t]
\caption{Adaptive Bidirectional Curriculum Generation}
\label{alg:bidirectional_curriculum}
\setlength{\baselineskip}{0.95\baselineskip}
\begin{algorithmic}[1]
\Require 
    Initial seed dataset $\mathcal{D}_{\text{seed}}$;
    Initial student model $\pi_{\theta_0}$;
    Maximum iterations $T$;
    Difficulty mapping function $\Phi(\cdot)$;
    Transformation operators $\mathcal{G} = \{\mathcal{G}_{\text{red}}, \mathcal{G}_{\text{rev}}, \mathcal{G}_{\text{inc}}, \mathcal{G}_{\text{div}}\}$.

\Ensure Optimized Student Model $\pi_{\theta_T}$.

    \State \textbf{// Stage 1: Seed Initialization (Section~\ref{subsec:initialize})}
    \State $\mathcal{D}_{\text{val}}^{(0)} \leftarrow \text{StratifiedSample}(\mathcal{D}_{\text{seed}}, N=200)$ based on subjects and $\Phi(p)$
    \State Initialize failure counter $\text{ind}_{\text{err}}(p) \leftarrow 0$ for all $p \in \mathcal{D}_{\text{val}}^{(0)}$
    \State $\mathcal{D}_{\text{train}}^{(0)} \leftarrow \emptyset$

    \Statex
    \State \textbf{// Main Iterative Curriculum Loop}
    \For{$t = 0$ to $T-1$}
        \State \textbf{// Stage 2: Diagnostic Evaluation (Section~\ref{subsec:diag})}
        \State $S_{\text{hard}}^{(t)}, S_{\text{easy}}^{(t)} \leftarrow \text{Partition}(\mathcal{D}_{\text{val}}^{(t)})$ based on $\mathbbm{1}_{\text{corr}}(p, \pi_{\theta_t})$
        \For{$p \in S_{\text{hard}}^{(t)}$}
            \State $\text{ind}_{\text{err}}(p) \leftarrow \text{ind}_{\text{err}}(p) + 1$ \Comment{Update Error Retention Policy}
        \EndFor

        \Statex
        \State \textbf{// Stage 3: Multi-Agent Data Generation (Section~\ref{subsec:multiagent})}
        \State $\mathcal{D}_{\text{remedy}}^{(t)} \leftarrow \emptyset$, $\mathcal{D}_{\text{adv}}^{(t)} \leftarrow \emptyset$
        
        \For{$p \in S_{\text{hard}}^{(t)}$} \Comment{Downward Adjustment}
            \State Generate $p' \leftarrow \{\mathcal{G}_{\text{red}}(p), \mathcal{G}_{\text{rev}}(p)\}$ 
            \State $\mathcal{D}_{\text{remedy}}^{(t)} \leftarrow \mathcal{D}_{\text{remedy}}^{(t)} \cup \{p' \mid \text{CheckFormat}(p') \land \text{Verify}(p')\}$
        \EndFor
        
        \For{$p \in S_{\text{easy}}^{(t)}$} \Comment{Upward Expansion}
            \State Generate $p'' \leftarrow \{\mathcal{G}_{\text{inc}}(p), \mathcal{G}_{\text{div}}(p)\}$
            \State $\mathcal{D}_{\text{adv}}^{(t)} \leftarrow \mathcal{D}_{\text{adv}}^{(t)} \cup \{p'' \mid \text{CheckFormat}(p'') \land \text{Verify}(p'')\}$
        \EndFor

        \Statex
        \State \textbf{// Stage 4: Curriculum Co-evolution (Section~\ref{subsec:coevolution})}
        \State \textbf{1. Persistent Failures:} $S_{\text{stubborn}} \leftarrow \{p \in S_{\text{hard}}^{(t)} \mid \text{ind}_{\text{err}}(p) > 3\}$
        \State \textbf{2. Training Set Update:} $\mathcal{D}_{\text{train}}^{(t+1)} \leftarrow \mathcal{D}_{\text{remedy}}^{(t)} \cup S_{\text{stubborn}}$
        \State \textbf{3. Validation Set Renewal:} $\mathcal{D}_{\text{val}}^{(t+1)} \leftarrow \{p \in S_{\text{hard}}^{(t)} \mid \text{ind}_{\text{err}}(p) \le 3\} \cup \mathcal{D}_{\text{adv}}^{(t)}$

        \Statex
        \State \textbf{// Model Optimization}
        \State $\theta_{t+1} \leftarrow \arg \min_{\theta} \mathbb{E}_{(x,y) \sim \mathcal{D}_{\text{train}}^{(t+1)}} [-\log \pi_{\theta}(y|x)]$ \Comment{Supervised Fine-Tuning}
    \EndFor

    \State \Return $\pi_{\theta_T}$
\end{algorithmic}
\end{algorithm*}

\end{document}